\newtheorem{definition}{Definition}
\newtheorem{proof}{Proof}
\newtheorem{theorem}{Theorem}
\newtheorem{lemma}{Lemma}
\newtheorem{remark}{remark}
\begin{document}

\title{FedIL: Federated Incremental Learning from Decentralized Unlabeled Data with Convergence Analysis}

\author{\IEEEauthorblockN{Nan Yang, Dong Yuan, Charles Z Liu,  Yongkun Deng and Wei Bao \\}
\IEEEauthorblockA{\textit{Faculty of Engineering, The University of Sydney} \\
\{n.yang, dong.yuan, zhenzhong.liu\}@sydney.edu.au}, yden9681@uni.sydney.edu.au, wei.bao@sydney.edu.au}



\maketitle

\begin{abstract}
Most existing federated learning methods assume that clients have fully labeled data to train on, while in reality, it is hard for the clients to get task-specific labels due to users' privacy concerns, high labeling costs, or lack of expertise. This work considers the server with a small labeled dataset and intends to use unlabeled data in multiple clients for semi-supervised learning. We propose a new framework with a generalized model, Federated Incremental Learning (FedIL), to address the problem of how to utilize labeled data in the server and unlabeled data in clients separately in the scenario of Federated Learning (FL). FedIL uses the Iterative Similarity Fusion to enforce the server-client consistency on the predictions of unlabeled data and uses incremental confidence to establish a credible pseudo-label set in each client. We show that FedIL will accelerate model convergence by Cosine Similarity with normalization, proved by Banach Fixed Point Theorem. The code is available at \href{https://anonymous.4open.science/r/fedil}{https://anonymous.4open.science/r/fedil}.

\end{abstract}

\begin{IEEEkeywords}
Semi-Supervised Learning, Federated Learning, Incremental Learning.
\end{IEEEkeywords}

\section{Introduction}

Federated Learning (FL) is a decentralized technique where multiple clients collaborate to train a global model through coordinated communication \cite{mcmahan2017communication,zhao2018federated,chen2019communication}. This method has been successfully applied in a range of applications and offers solutions to data privacy, security, and access issues\cite{hard2018federated,yang2019ffd,brisimi2018federated}. However, previous FL research \cite{han2020robust} often assumes that clients have fully annotated data with ground-truth labels, which can be an unrealistic assumption as labeling data is a time-consuming, expensive process that often requires the participation of domain experts. A more practical scenario is to share a limited amount of labeled data on the server while assisting clients with unlabeled data in model training \cite{jeong2021federated}, as proposed in recent studies.

The samples and labels on the server side may be accurate, however, it must be acknowledged that the data on the server is limited and does not provide a complete picture. This results in the reflection of only local features, and not global features. While other clients may have more extensive data coverage, the lack of reliable label information causes the features to be unreliable and uncertain. The two main challenges in model training in an FSSL scenario are depicted in Figure \ref{challenges}. Firstly, overfitting can occur when too much reliance is placed on labeled data from the server, leading to poor model generalization. Secondly, the absence of ground-truth labels in the client data may result in the annotating of incorrect pseudo-labels, causing model mislearning and preventing convergence of model training.

Motivated by this practical scenario, a naive solution is to simply perform SSL methods using any off-the-shelf methods (e.g. FixMatch \cite{DBLP:conf/nips/SohnBCZZRCKL20}, UDA \cite{xie2020unsupervised}), while using federated learning strategies to aggregate the learned weights.
The recent method FedMatch \cite{jeong2021federated} uses existing SSL methods based on pseudo-labeling and enhances the consistency between predictions made across multiple models by deploying a second labeled dataset for validation on the server, but additionally increases the need for labeled data. 
Other approaches FedU \cite{DBLP:conf/iccv/ZhuangG0ZY21}, FedEMA \cite{zhuang2021divergence}, and Orchestra \cite{DBLP:conf/icml/LubanaTKDM22}, use self-supervised strategies to correct the training results of aggregated client models by labeled data on the server. These methods may result in a learning strategy that heavily relies on the feature information of labeled data, leading to the risk of overfitting.
Therefore, a major challenge in FSSL is finding a way to correct the training bias caused by uncertain samples and labels on clients while also learning correct feature information from clients that the server does not have. Additionally, current research in this field is based on empirical model design, with no analysis of model convergence for the FSSL scenario.

\begin{figure*}[t]
  \centering
  \includegraphics[width=0.95\textwidth]{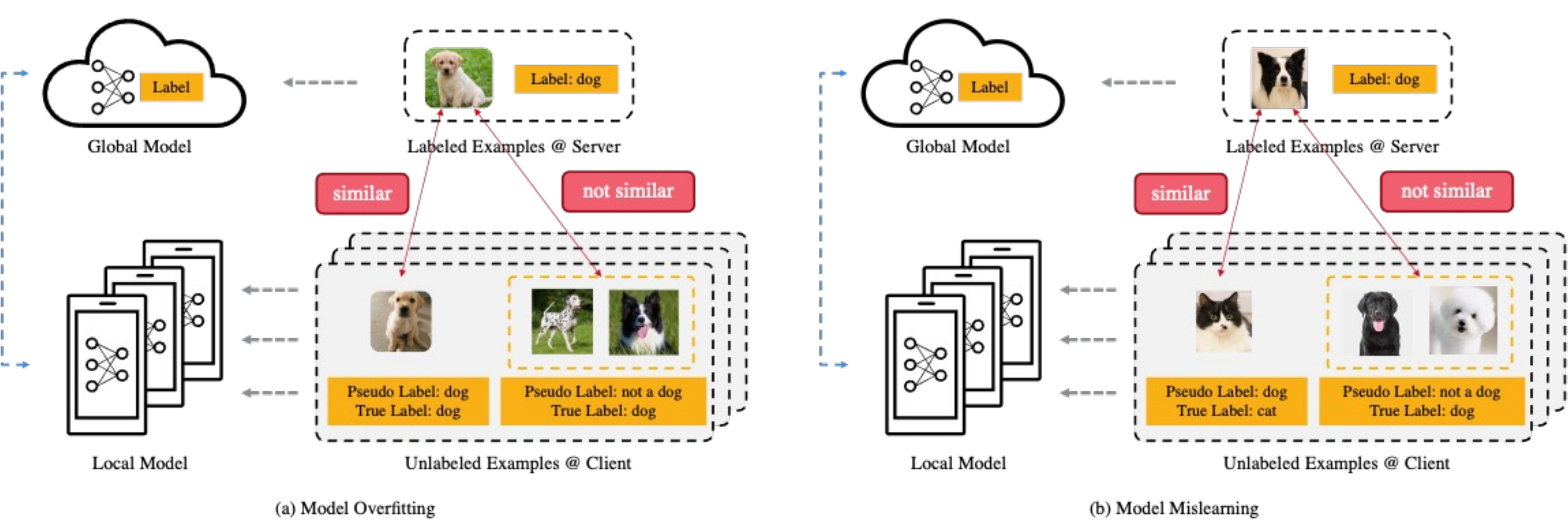}
  \caption{\textbf{Illustrations of Two Challenges in Federated Semi-Supervised Learning} (a) Model overfitting is caused by over-reliance on the labeled data on the server, leading to poor recognition of one class that is not similar to those in the training data. (b) Model mislearning is caused by incorrect pseudo-labels generated in clients, leading to incorrect classification in that the data belongs to one class but is labeled as another.}
  \label{challenges}
\end{figure*}

In this paper, we propose Federated Incremental Learning (FedIL), which is a novel and general framework, for addressing the problem of FSSL, aiming at the demanding challenge of how to utilize separated labeled data in the server and unlabeled data in clients. 
Inspired by the mainstream semi-supervised learning methods based on pseudo-labeling, we alternate the training of labeled data in the server and unlabeled data in selected clients and employ a siamese network for contrastive learning to ensure acquiring high-quality pseudo-labels during training. To prevent the model from forgetting what it has learned from labeled data in the server when learning unlabeled data in clients, our method uses KL loss during client training to enforce the consistency between the predictions made by clients and the server. 
In clients, FedIL selects high-confidence pseudo-labels obtained through the mechanism of Incremental Credibility Learning to establish an independent and highly credible pseudo-label set in each client, allowing each client to achieve authentic semi-supervised learning to reduce the training bias during the client training.
In the server, we screen the uploaded client weights by Cosine Similarity with normalization to accelerate the convergence of model training.

The design of FedIL is based on the specific needs of the FSSL scenario, especially in protecting the data privacy of clients. The convergence condition of the traditional FL model is the decrease of the Loss, while the Loss in FSSL is computed by pseudo-labels rather than ground-truth labels, which is unreliable, and thus cannot guarantee the model convergence. Therefore, we first propose a Theorem base on the Banach fixed point to use weight difference as a criterion rather than Loss to determine model convergence in FSSL, which reveals that the model enters a progressive convergence stage if the increment difference of weight between the server and the client decreases.

The main contributions of this work are summarized as follows:

\begin{itemize}
\item 
We first introduce \textbf{Incremental Credibility Learning} to select pseudo labels with higher confidence for joining a pseudo-label set to establish a real and reliable semi-supervised training in each client. 
We also propose \textbf{Global Incremental Learning} to use \textbf{Cosine Similarity} with normalization to select client weights that are close to the server weight, which accelerates model learning.

\item
We further propose a Theorem base on the \textbf{Banach fixed point}, a new definition of convergence for FSSL, that is based on weight difference rather than the loss in supervised FL. This Theorem reveals that the system enters a progressive convergence stage when the weight increment difference between the server and clients decreases.

\item
We propose \textbf{FedIL}, a novel FSSL framework that can enable combined with any type of semi-supervised model based on pseudo labeling and enables learning server-client consistency between supervised and unsupervised tasks. The experiment results show that FedIL outperforms most of the state-of-the-art FSSL baselines on both IID and non-IID settings.

\end{itemize}

\section{Related Work}
\subsection{Federated Learning}
Traditional Federated Learning (FL) is a distributed training technique for acquiring knowledge from decentralized clients without transmitting raw data to the server \cite{mcmahan2017communication}. A number of methods for averaging local weights at the server have been developed in recent years. FedAvg \cite{mcmahan2017communication} is the standard algorithm for FL, which averages local weights to update the global weight according to the local training size. FedProx \cite{li2020federated} uniformly averages the local weights while clients execute proximal regularisation against the global weights. The aggregation policy introduced by PFNM \cite{yurochkin2019bayesian} makes use of Bayesian non-parametric methods. FedMA \cite{wang2020federated} matches the hidden elements with comparable feature extraction signatures when averaging local weights. In addition to the research of aggregation strategies, Non-IID data is also one of the most significant problems of FL \cite{wang2020federated}, causing weight divergence and performance degradation, as explained in \cite{zhao2018federated}. Numerous solutions to this problem have been proposed, including providing a public dataset \cite{zhao2018federated}, knowledge distillation \cite{zhuang2020performance}, and regularising client training \cite{li2020federated}.

\begin{figure*}[t]
  \centering
  \includegraphics[width=0.9\textwidth]{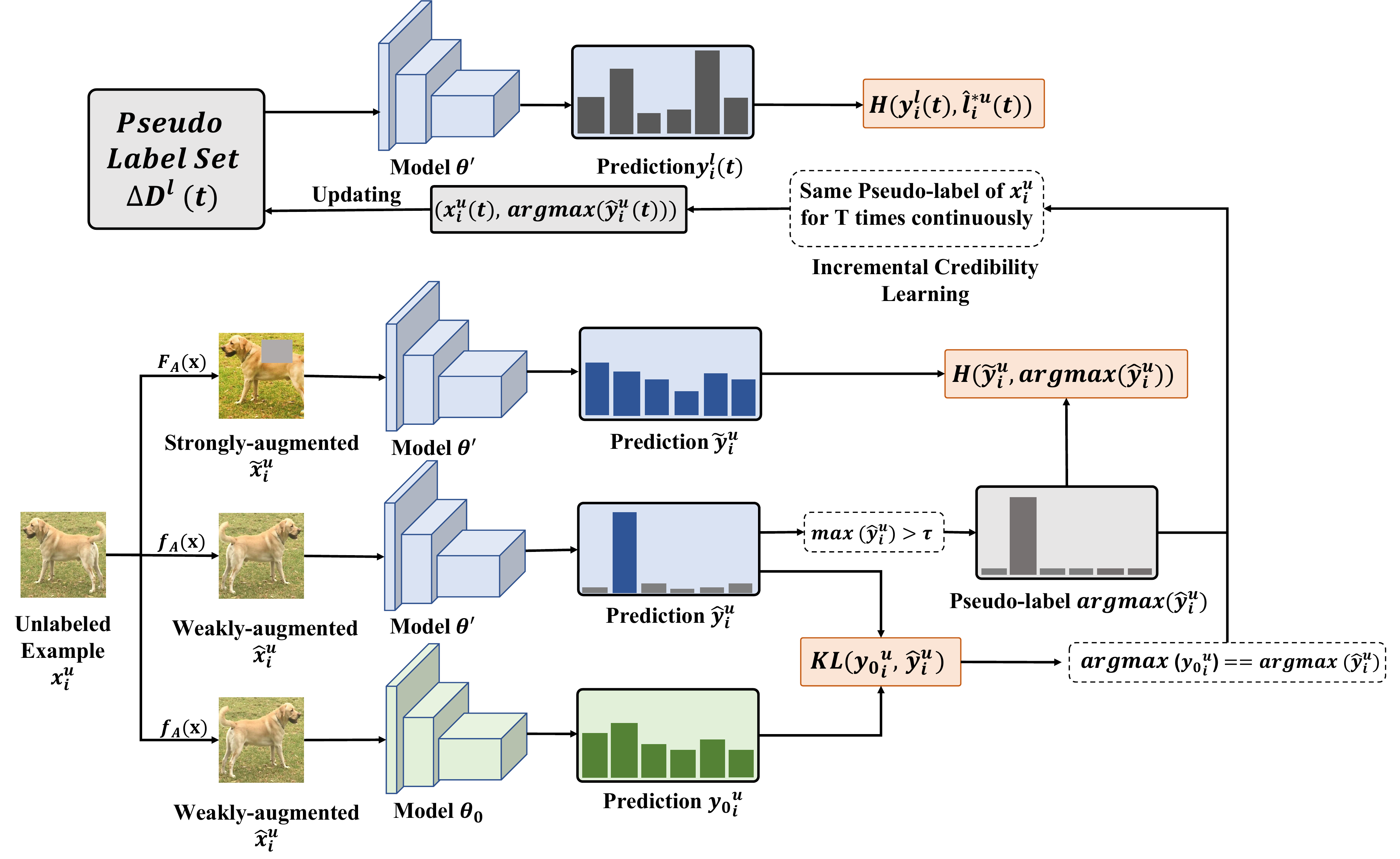}
  \caption{\textbf{Illustrative Running Example of Client.} The client uses the global model $\theta^{'}$ and the server model $\theta_0$ to train the unlabeled data to get the client model $\theta^{*}$. It comprises an end-to-end training pipeline with three steps: 1) Each client selects pseudo-labels with higher confidence. 2) Move the data with pseudo-labels that are continuous and stable for a certain time to the pseudo-label set in each client. 3) Use KL loss during client training to enforce the prediction consistency between the predictions made by clients and the server.}
  \label{Client}
\end{figure*}

\subsection{Semi-supervised Learning}
Semi-supervised learning (SSL) combines supervised learning and unsupervised learning, which aims to improve model performance by leveraging unlabeled data \cite{rasmus2015semi,zhou2005tri,chapelle2009semi}. 
The majority of SSL methods fall into two categories: Consistency Regularization and Self-Supervised Learning. 

Consistency regularization \cite{sajjadi2016regularization} assumes that transformation of the input instances will have no effect on the class semantics and requires that the model output be consistent across input perturbations. UDA \cite{xie2020unsupervised} and ReMixMatch \cite{DBLP:conf/iclr/BerthelotCCKSZR20} use data augmentations to enforce the consistency between the representations of two versions of unlabeled data. Pseudo-Label \cite{lee2013pseudo} is a popular and mainstream semi-supervised technique that use hard (1-hot) labels from the model's prediction as labels on unlabeled data, implicitly minimizing the prediction's entropy. FixMatch \cite{DBLP:conf/nips/SohnBCZZRCKL20} selects pseudo-labels as consistency between weak-strong augmented pairs from unlabeled data using a fixed and unified threshold. 

Self-supervised learning obtains supervisory signals from the data itself. For example, contrastive learning\cite{hadsell2006dimensionality,oord2018representation} tries to reduce the similarity of positive samples while increasing the similarity of negative samples. The negative pairs are created from either a memory bank, such as MoCo \cite{he2020momentum}, or a huge batch size, such as SimCLR \cite{chen2020simple}, but these methods strongly rely on the computation resources. Methods such as BYOL \cite{grill2020bootstrap} and SimSiam \cite{chen2021exploring}  can avoid negative pairs and only compare positive ones, which can be used when computation resources are limited. After a pre-train model is generated by self-supervised learning, a small number of labeled data will be fed to this model.

\subsection{Federated Semi-supervised Learning}
The majority of existing FL research focuses on supervised learning problems using ground-truth labels provided by clients while most clients are unlikely to be specialists in many real-world tasks, which is faced as an issue in recent studies \cite{jin2020towards}.  
FSSL introduces learning representations from unlabeled decentralized data into FL and tries to overcome the labeling issue in traditional FL. 
Several solutions have been offered to realize FSSL by incorporating classical semi-supervised learning into the framework of federated learning. 

There are currently two FSSL scenarios, one is Labels-at-Client, and the other one is Labels-at-Server. For the Labels-at-Client, RSCFed \cite{liang2022rscfed} relies on label clients to assist in learning unlabeled data in other clients, while there is still a high risk of data leakage in this scenario. For Labels-at-Server, FedMatch \cite{jeong2021federated} focused on adapting the semi-supervised models to federated learning, which introduced the inter-client consistency that aims to maximize the agreement across models trained at different clients. FedU \cite{DBLP:conf/iccv/ZhuangG0ZY21} is based on the self-supervised method BYOL \cite{grill2020bootstrap}, which aims for representation learning. FedEMA \cite{zhuang2021divergence} is an upgraded version of FedU, which adaptively updates online networks of clients with EMA of the global model. Orchestra \cite{DBLP:conf/icml/LubanaTKDM22} relies on high representational similarity for related samples and low similarity across different samples.

Our FedIL consists of novel models that enable clients with just unlabeled data to complete semi-supervised training and the novel Cosine Similarity  with normalization based aggregation that accelerates the convergence of the global model in federated learning.

\section{Method}
In the following sections, we introduce FedIL, our proposed federated incremental learning approach shown in Figure \ref{Client} and Figure \ref{Server}. Section A will describe the scenario setting of Federated Semi-Supervised Learning. In the following sections B, C, and D, we will introduce the walkthrough procedures of the Client system, which has been illustrated as shown in Figure \ref{Client}. In section E, we will introduce training and weight selection of the Server system shown in Figure \ref{Server}. The summary of the whole system will be introduced in section F.

\subsection{Modeling and Federated Mapping}
Let the whole dataset for the training be $D$ as
\begin{equation}
    D=\left[X,y\right]=
    \begin{bmatrix}
X^s & y^s\\
X^u & y^u
\end{bmatrix}=
\begin{bmatrix}
D^s\\
D^u
\end{bmatrix}
\end{equation}
where $X$ refers to the input and $y$ refers to the label, $X^s$ refers to the input with known labels $y^s$, being the dataset $D^s=[X^s,y^s]$; while $X^u$ refers to the input without labels, and $y^u$ serves as unknown labels corresponding to $X^u$. 
The main task is to obtain the estimation $\bar{y}^u$ corresponding to the input $X^u$ with the mapping $f$ built based on $D$, which satisfies
\begin{equation}
\label{mappingFormula}
\left\lbrace
\begin{array}{ll}
    \min_f \|y^u-\bar{y}^u\|\\
    \bar{y}^u=f(X^u)
\end{array}
\right.
\end{equation}
in which $\|\cdot\|$ refers to the difference measurement.

Solving the mapping $f$ that satisfies~\eqref{mappingFormula} is the process of finding a function that can be fitted to the best mapping relationship by known ${D^s}$.
However, in the practical process, the unidentified label $y^u$ is unknown, so we cannot know exactly what the label corresponding to $X^u$ is. Therefore, we introduce a semi-supervised learning strategy for solving this mapping, which can be formulated as
\begin{equation}
\label{bary}
\bar{y}^u=f(X^u,\theta)
\end{equation}
in which $\theta$ refers to the weights for the learning.

\begin{figure}[t]
  \centering
  \includegraphics[width=0.48\textwidth]{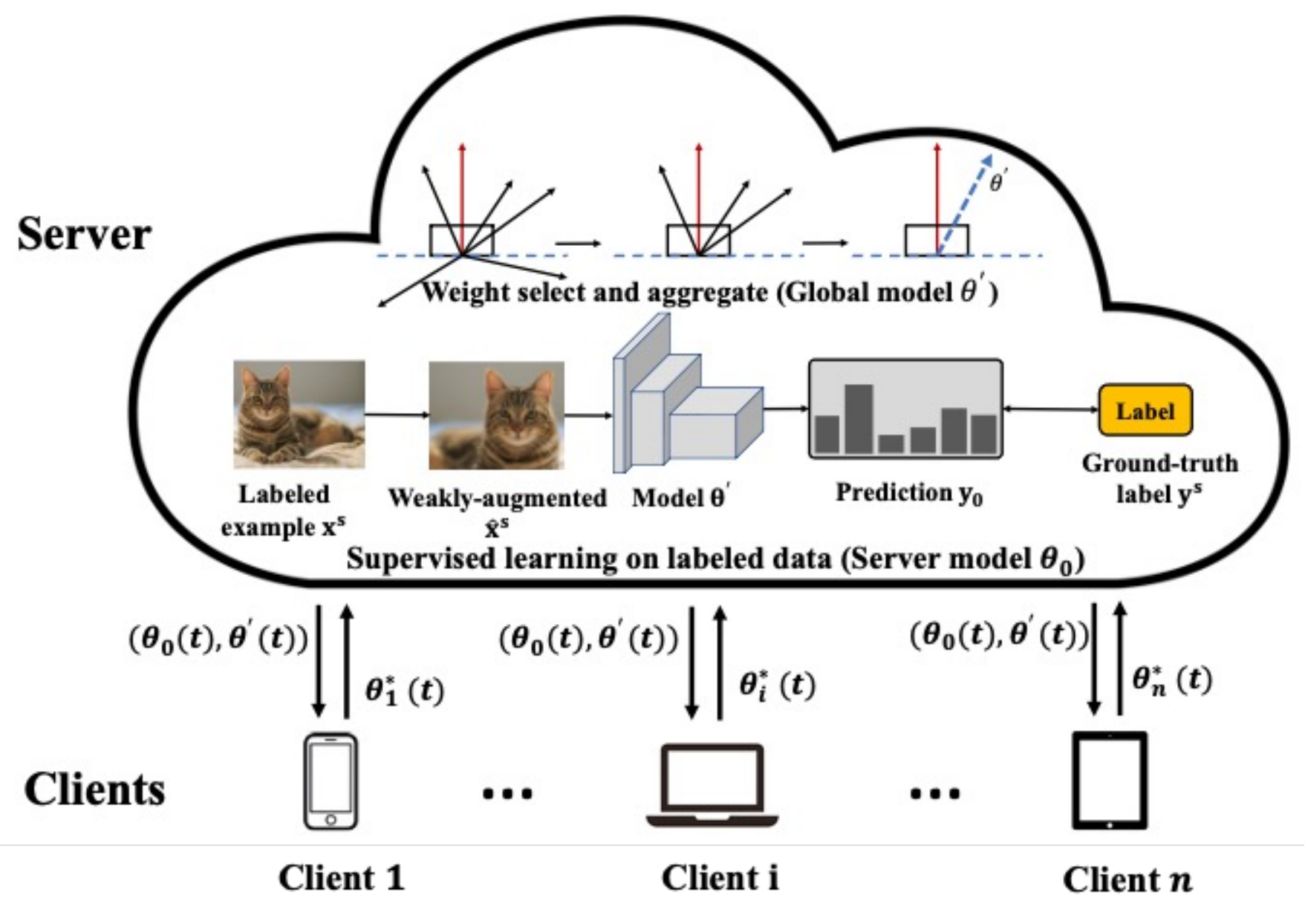}
  \caption{\textbf{Illustrative Running Example of Server.} We describe training and updating in the server and introduce the communication procedure between the server and local clients. The server uses Cosine Similarity with normalization to select client weights that are close to the server weight, which accelerates model training. The aggregated global model is $\theta^{'}$, and the server uses $\theta^{'}$ to supervise training the labeled data to get the server model $\theta_0$. }
  \label{Server}
\end{figure}

In federated learning, the learning system will take place in a distributed system that is composed of a Server and 
multiple Clients with index $i=1,2,\ldots,n$. Each client uses a dataset $D_i^u$ with unlabeled data $X_i^u$.
The server and all the clients have the same learning model $f$. 
Each local client learns a part of the data and submits its learning weights $\theta_i$ to the server, and the server (index $i=0$) collects and integrates weights, and then performs supervised learning on labeled dataset $D^s$.
The client-mapping corresponding to \eqref{bary} can be formulated as
\begin{equation}
\bar{y}_i^u = f({X}_i^u, \theta_i) \ \ \ \ \ \ i=1,2,\ldots,n
\end{equation}
and the optimal overall solution is obtained by
\begin{equation}
y^*=f(X,\theta^*)
\end{equation}
It can be seen that the main task now has been converted into finding the optimal parameter $\theta^*$ corresponding to~\eqref{mappingFormula} as
\begin{equation}
\label{thetaSol}
    \theta^* = \arg \min_{\theta} \|f(X,\theta)- y\|\\
\end{equation}
Once the $\theta^*$ is obtained the mapping $f$ can be determined.

Since $y^u \in y$ is unknown, it is impossible to solve $\theta$ by directly calculating the measure between the predicted label and the actual label of the domain, so we propose an iterative method to estimate $\theta^*$ based on $\bar{y}_i^u$ and $\theta_i$ with incremental similarity belief. 

\subsection{Hyper-Pseudo Pattern Space}
For the dataset $D^u=[X^u, y^u]$ with unknown labels, it is not feasible to use the difference between $\bar{y}^u$ and $y^u$ to construct a loss function for learning model updates. The $y^u$ are pseudo labels generated by the model strategies, which cannot be equivalent to the ground-truth labels and suffers from certain labeling errors. Therefore, we constructed the Hyper-Pseudo Pattern Space to approximate a pattern of loss model in FSSL, which is equivalent to the loss model for centralized learning built with $y^u$ and $\bar{y}^u$, through a set of nonlinear projections based on a set of augmentations of $X^u$ and corresponding estimations to the labels.

We expect to detect more features from the data itself to guess the commonality of their labels. Therefore, we introduce pseudo-data derivation from data and weights to generate hyper context. Specifically, we introduce three derivative mappings, including weak augmentation $f_A(\cdot)$, strong augmentation $F_A(\cdot)$, and 
the server reference $f(\cdot,\theta_0)$ to derive three estimations to the labels, as shown in Figure \ref{Client}, in which weakly and strongly augmentation sub-data of $X^u$ can be obtained by
\begin{equation}
\hat{X}^u = f_A(X^u) \ \ \ \ \ \ \ \widetilde{X}^u = F_A(X^u)
\end{equation}
and the estimated $\hat{y}^u, \widetilde{y}^u$ corresponding to $\hat{X}^u, \widetilde{X}^u$ are obtained by mapping $f$ as
\begin{equation}
\label{yest}
\hat{y}^u = f(\hat{X}^u, \theta^{'}) \ \ \ \ \ \ \ \widetilde{y}^u =f(\widetilde{X}^u, \theta^{'})
\end{equation}
and the server reference label 
can be obtained by
\begin{equation}
\label{pseudoOptimal}
y_0^u = f(\hat{X}^u, \theta_0)
\end{equation}
where $\theta_0$ refers to the weight from supervised learning in the server.
The client-mapping corresponding to \eqref{yest}-\eqref{pseudoOptimal} can be formulated as
\begin{equation}
\begin{aligned}
\hat{y}_i^u = f(\hat{X}_i^u, \theta^{'})  \ \ \ \widetilde{y}_i^u =f(\widetilde{X}_i^u, \theta^{'})  \ \ \ {y_0}_i^u = f(\hat{X}_i^u, \theta_0) \\ i=1,2,\ldots,n
\end{aligned}
\end{equation}
With the estimated labels, we can select a part of the estimated $y^u$ (pseudo labels) as reference labels to enhance the likelihood of classification learning for unlabeled samples. To distinguish the unselected $X^u, y^u$, we rewrite the selected pseudo-labels and their corresponding dataset as $D^l=[X^l,\hat{l}^{*u}]$, which are described in section D.

\subsection{Iterative Similarity Fusion}
Since there is no 
known label $y^u$ to determine the final optimal solution $f$ in~\eqref{mappingFormula}, we design an iterative approach to optimize the processing of 
the most probable optimal solution. On the basis of the above hyper-pseudo space, we use a number of different similarity comparisons and fusions for final confirmation and find an optimal solution that can optimize the fitting weights in different hyper-pseudo contexts.

Let $\theta_i(t)$ be the parameter of $i$th local client at $t$ training rounds. The local optimal parameter $\theta^*_i(t)$ is formulated as
\begin{equation}
\label{ISF}
    \theta_i^*(t) = f^b(\xi_i^a(t) + \xi_i^b(t))
\end{equation}
where $f^b$ refers to the back-propagation weight updating for $f$, $\xi^a$ is the Cross-entropy of the 
$\widetilde{y}_i^u$ and $\hat{y}_i^u$, and $\xi^b$ is the Kullback–Leibler divergence to measure how one probability distribution $\hat{y}_i^u$ is different from a reference probability distribution ${y_0}_i^u$, which can be formulated as
\begin{equation}
\left\lbrace
\begin{array}{ll}
     \xi_i^a(t)= \varepsilon(\max(\hat{y}_i^u(t))-\tau) H(\widetilde{y}_i^u(t),\arg\max(\hat{y}_i^u(t)))\\
\xi_i^b(t)=KL({y_0}_i^u(t),\hat{y}_i^u(t))
\end{array}
\right.
\end{equation}
in which $\varepsilon(x)$ refers to step function as
\begin{equation}
\label{stepFun}
\varepsilon (x)=
   \left\lbrace
   \begin{array}{ll}
1 \ \ x\geq 0\\
0 \ \ else
   \end{array}
   \right.
\end{equation}
$f^b$ here is equivalent to fusing the features obtained by each context and using it to update the weights of the clients.

In the standard semi-supervised learning methods, learning on labeled and unlabeled data is simultaneously done using a shared set of weights. However, this may result in forgetting knowledge of labeled data in disjoint learning scenarios, such as FSSL. FedIL ensures consistency between the class with the highest probability predicted by the local model and the server model when updating the pseudo-label dataset. However, FedIL wants to learn information about all classes instead of treating all negative labels uniformly when training a local model with an unselected unlabeled dataset. In the output of the softmax layer, other classes carry a lot of information besides the class with the highest probability. Therefore, in addition to the regular Cross-entropy loss $\xi^a(t)$ in semi-supervised models, we propose a consistency KL loss $\xi^b(t)$ that regularizes the models learned between the server and clients to output the same estimation.


\subsection{Incremental Credibility Learning with Dataset Updating}
For unlabeled training data, it is difficult to guarantee that pseudo-labels given by the system at the beginning are correct. We, therefore, design incremental confidences to observe whether their estimated labels always stabilize at a deterministic result after successive training and validation. Each continuous identification of a classification result will increase the credibility of its label. In the current FSSL scenario, labeled data and unlabeled data are disjoint. To enable complete semi-supervised training in the client, we design a dataset updating strategy by introducing a pseudo-label set to store the selected high-confidence data.


In order to integrate the credibility into the optimization for the parameter mapping solution, we further formulate the optimal incremental credibility learning as
\begin{equation}
    \theta_i^*(t) = f^b( \xi_i^a(t) + \xi_i^b(t) + \xi_i^c(t))
\end{equation}
in which $f^b(\cdot)$ refers to the back-propagation mapping to update the weight based on the loss input $(\cdot)$ and
\begin{equation}
    \xi^{c}(t)=H(y_{i}^{l}(t),\hat{l}_{i}^{*u}(t))
\end{equation}
where ${y}_i^l(t)$ refers to the predictions generated by the input $X^l_i$ using the model trained by $D_i^l(t)$, which can be formulated as
\begin{equation}
    {y}_i^l(t)=f(X^l_i,\theta_i(t))
\end{equation}
and $\hat{l}_i^{*u}(t)$ refers to the optimal pseudo-labels which satisfied two main principles, 1) the label has been consecutively selected as a pseudo-label no less than $T$ times; 2) the count of the compliance of the rule $\arg\max({y_0}_i^u(t))==\arg\max(\hat{y}_i^u(t)))$ has been no less than $T$ times.
The first time an unlabeled image is involved in training in $D_i^l(t)$ can be formulated as
\begin{equation}
    \hat{l}_i^{*u}(t)= \alpha (\hat{y}_i^u(t), \check{l}_i^u(t))
    \arg \max(\hat{y}_i^u(t-t^1_i))
\end{equation}
\begin{equation}
    \check{l}_i^u(t)= {\mathbbm{1}} (\arg \max({y_0}_i^u(t))==\arg\max(\hat{y}_i^u(t)))
\end{equation}
in which $ {\mathbbm{1}}$ refers to the selection function and
\begin{equation}
 \alpha(\hat{y}_i^u(t), \check{l}_i^u(t))= \prod_{k=1}^T \varepsilon (\max(\hat{y}_i^u(t-t^k_i))-\tau)
 \varepsilon (\check{l}_i^u(t-t^k_i)))
\end{equation}

When the result is continuous and stable for a long time after a series of confidence increments accumulate to reach a threshold, the data $X_i^u(t)$ and the label $\arg \max(\hat{y}_i^u(t))$ will be used as candidate credible samples as $\Delta D_i^l(t)$ and incorporated into the known label dataset $D_i^l(t+1)$ as a reference with unchanged labels for the following training rounds. 


The federated pseudo-label set updating can be formulated as
\begin{equation}
D_i^l(t+1) = [X^l_i(t+1),\hat{l}_i^{*u}(t+1)]\\
\end{equation}
where
\begin{equation}
   \left\lbrace
   \begin{array}{rl}
   D_i^l(t)&=\{X_i^l(t),\hat{l}_i^{*u}(t)\}\\
    X_i^l(t+1)&=X_i^l(t)\cup \Delta X_i^l(t)\\
    \Delta X_i^l(t)&=\{F_A^{-1}(\widetilde{y}_i^u(t),\theta_i(t))\ |
    \ \alpha(\hat{y}_i^u(t), \check{l}_i^u(t))=1\}\\
    \hat{l}_i^{*u}(t+1)&=\hat{l}_i^{*u}(t)\cup\Delta\hat{l}_i^{*u}(t) \\
    \Delta\hat{l}_i^{*u}(t)&=f(\Delta X_i^l(t),\theta_i(t))\\
   \end{array}
   \right.
\end{equation}
in which $F_A^{-1}$ refer to the inverse mapping of $F_A$.

\subsection{Global Incremental Learning}
As an iterative update of federated learning, we propose global incremental learning by \textbf{Cosine Similarity} with weight normalization to update the learning weights of each round which is shown in Figure \ref{Server}, that is, the final learning results of each round will be aggregated into $\theta'$ through the server ($i=0$) system and assigned to each client as the weights of the next round of initial training.
The parameter updating of the server ($i=0$) can be formulated in an iterative form as
\begin{equation}
   \theta_0(t)=f^b(H(y^s, \arg\max(y_0(t))))
\end{equation}
where $y_0(t)$ refers to the prediction based on the $X^s$ with the model $f$ and parameter $\theta'$, i.e.,
\begin{equation}
   y_0(t)=f(X^s,\theta'(t))
\end{equation}
in which $D^s=[X^s,y^s]$ refers to the supervised learning dataset with known labels $y^s$.
So the process of solving the model $f$ is equivalent to the process of optimizing $\theta'(t)$.

The aggregated updating can be formulated as
\begin{equation}
\label{GIL}
    \theta'(t+1) = \theta'(t)+\Delta \theta(t)
\end{equation}
where the $\Delta \theta(t)$ can obtained by \eqref{incrementalTheta}.

The incremental $\Delta \theta(t)$ can be formulated as
\begin{equation}
\label{incrementalTheta}
\Delta \theta(t) = \frac{\sum^n_{i=1} \varepsilon (S(t)) (\theta_i^*(t)-\theta'(t))}{\sum_{i=1}^n\varepsilon (S(t))}
\end{equation}
in which $S(t)$ refers to the cosine similarity between $\theta^*_i(t)-\theta'(t)$ and $\theta_0(t)-\theta'(t)$ that
\begin{equation}
    S(t)=\cos(\theta^*_i(t)-\theta'(t),\theta_0(t)-\theta'(t))
\end{equation}

\subsection{Summarization of the Framework}
FedIL well solves the problem of disjoint between labeled data and unlabeled data and achieves complete semi-supervised training by establishing its own pseudo-label set on each client. Furthermore, Global Incremental Learning helps the server to effectively select the uploaded client weights, which accelerates the convergence of the global model. Based on the federated learning scenario, we separately demonstrate the training process on the server in Algorithm \ref{alg1} and the training process on local clients in Algorithm \ref{alg2}. 

For Server-side training in Algorithm \ref{alg1}, we aggregate the selected weights from clients in the same direction as the server weight by cosine similarity with normalization. For example, we demonstrate how to derive $\theta^{'}(t=2)$. Let $t=1$, which means the first training round starts, and initialize global model $\theta^{'}(t=1)$ randomly. Take $\theta^{'}(t=1)$ as a reference point and train on the labeled data to get $\theta_0(t=1)$, and broadcast $\theta^{'}(t=1)$ and $\theta_0(t=1)$ to clients for training. After the first training round, select the client models $\theta^{*}_i(t=1)$ that are close to the server model $\theta_0(t=1)$ by using cosine similarity $\cos(\theta^{*}_i(t=1)-\theta^{'}(t=1),\theta_0(t=1)-\theta^{'}(t=1))$. Next, calculate the increment of the difference $\Delta \theta(t=1)$ between the selected client models and the global model by (25), and use (24) to get $\theta^{'}(t=2)$.

For Client-side training in Algorithm \ref{alg1}, we use KL loss to enforce Server-Client consistency and use Incremental Credibility Learning to build high-confidence Pseudo Label Sets.
For example, when selected clients receive the server weight $\theta_0(t)$ and the global weight $\theta^{'}(t)$ from the server, unsupervised training based on pseudo labeling will start on clients and we use Cross-entropy loss and KL loss to update Client weight $\theta^*_i(t)$. For Pseudo Label Set updating, when pseudo labels $\arg\max(\hat{y}_i^u(t))$ of one image are continuous and stable for a long period of time, the data $X_i^u(t)$ and the label $\arg \max(\hat{y}_i^u(t))$ will be used as candidate credible samples in $\Delta D_i^l(t)$ and incorporated into the known label dataset $D_i^l(t+1)$ as a reference with unchanged labels for the subsequent training rounds.

\begin{algorithm}
    \caption{FedIL in the Server}
    \label{alg1}
\begin{algorithmic}[1]
    \FOR{each training round $t = 1,2,3...$}
        \IF{$t=1$} 
            \STATE randomly initialize $\theta^{'}(t=1)$
            \STATE \textbf{Processing with} (22) to update Server weight $\theta_0(t=1)$ 
            \STATE Randomly select 5 clients from 100 clients
            \STATE broadcast $\theta^{'}(t=1)$ and $\theta_0(t=1)$ to the next selected Clients
        \ELSE
            \STATE Received $\theta^*_i(t)$ from clients \\
            \COMMENT{\textit{received the weights from clients selected in the last round.}}
            \STATE \textbf{Processing with} (24)(25)(26) to update the global weight $\theta^{'}(t+1)$
            \STATE  \textbf{Processing with} (22)(23) to update Server weight $\theta_0(t+1)$
            \STATE Randomly select 5 clients from 100 clients
            \STATE broadcast $\theta^{'}(t+1)$ and $\theta_0(t+1)$ to the next selected Clients
        \ENDIF
    \ENDFOR
\end{algorithmic}
\end{algorithm}

\begin{algorithm}
    \caption{FedIL in selected Clients}
    \label{alg2}
\begin{algorithmic}[1]
    \FOR{each selected Client $i$ in parallel}
        \STATE Received the Server weight $\theta_0(t)$ and the global weight $\theta^{'}(t)$ from the Server
        \STATE \textbf{Processing with} (12)(13)(14)(15)(16) to update Client weight $\theta^*_i(t)$ 
        \FOR{each Pseudo Label Set updating}
            \STATE \textbf{Processing with} (17)(18)(19)(20)(21) to update Pseudo Label Set $D_i^l(t+1)$
        \ENDFOR
        \STATE upload $\theta^*_i(t)$ to the Server
    \ENDFOR
\end{algorithmic}
\end{algorithm}

\section{Convergence Analysis of FedIL}


In this section, we provide that the proposed global incremental learning possesses the theoretical justification of fixed point convergence, which can be proved as follows.

\begin{definition}
\label{defContractMapping}
Let $(X,d)$ be a complete metric space. Then a map $ T : X \to X$ is a contraction mapping on $X$ if $\exists q\in [0,1)$ such that
\begin{equation}
    d(T(x),T(y))\leq qd(x,y) \forall x,y\in X.
\end{equation}
\end{definition}

\begin{lemma}
\label{banachTh}
Banach Fixed Point Theorem. Let $(X,d)$ be a non-empty complete metric space with a contraction mapping $T:X\to X$. Then T admits a unique fixed-point $x^{*}$ in $X$ (i.e. $T(x^{*})=x^{*})$.
\end{lemma}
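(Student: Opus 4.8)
The plan is to give the classical Picard-iteration argument. First I would fix an arbitrary point $x_0 \in X$ and define the sequence of iterates by $x_{n+1} = T(x_n)$ for $n \geq 0$; this is well-defined because $T$ maps $X$ into itself. Applying the contraction hypothesis of Definition \ref{defContractMapping} repeatedly and arguing by induction on $n$ gives the estimate
\begin{equation}
d(x_{n+1}, x_n) \leq q\, d(x_n, x_{n-1}) \leq \cdots \leq q^n\, d(x_1, x_0).
\end{equation}

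Next I would show $(x_n)$ is a Cauchy sequence. For any $m > n$, the triangle inequality together with the previous bound and the geometric-series sum yields
\begin{equation}
d(x_m, x_n) \leq \sum_{k=n}^{m-1} d(x_{k+1}, x_k) \leq \Big(\sum_{k=n}^{m-1} q^k\Big) d(x_1, x_0) \leq \frac{q^n}{1-q}\, d(x_1, x_0).
\end{equation}
Since $q \in [0,1)$, the right-hand side tends to $0$ as $n \to \infty$, so $(x_n)$ is Cauchy; by completeness of $(X,d)$ it converges to some limit $x^{*} \in X$.

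Then I would verify that $x^{*}$ is a fixed point. Because $T$ is a contraction it is Lipschitz, hence continuous, so passing to the limit in $x_{n+1} = T(x_n)$ gives $x^{*} = \lim_n x_{n+1} = \lim_n T(x_n) = T(x^{*})$; alternatively one can bound $d(x^{*}, T(x^{*})) \leq d(x^{*}, x_{n+1}) + q\, d(x_n, x^{*})$ and let $n \to \infty$. For uniqueness, if $x^{*}$ and $y^{*}$ both satisfy $T(x^{*}) = x^{*}$ and $T(y^{*}) = y^{*}$, then $d(x^{*}, y^{*}) = d(T(x^{*}), T(y^{*})) \leq q\, d(x^{*}, y^{*})$, and since $q < 1$ this forces $d(x^{*}, y^{*}) = 0$, i.e. $x^{*} = y^{*}$.

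There is no deep obstacle here, as this is a standard result; the one point needing care is the Cauchy estimate, where the distances must be telescoped correctly and the strict inequality $q < 1$ (rather than $q \leq 1$) is essential to sum the geometric tail and to conclude uniqueness. In the paper this lemma will then be applied by exhibiting the global update in \eqref{GIL}--\eqref{incrementalTheta} as a contraction mapping on the weight space, so that the decrease of the server--client weight increment difference identifies the progressive convergence toward the unique fixed point $\theta^{*}$.
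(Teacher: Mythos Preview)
Your argument is the standard Picard-iteration proof and is entirely correct: the inductive bound $d(x_{n+1},x_n)\leq q^n d(x_1,x_0)$, the geometric-series Cauchy estimate, the passage to the limit via continuity of $T$, and the uniqueness step are all carried out properly, and you correctly flag that $q<1$ (strictly) is the point on which both summability and uniqueness hinge.

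The only thing to note in comparison with the paper is that the paper does \emph{not} supply a proof of this lemma at all. It is stated as the classical Banach Fixed Point Theorem and accompanied only by a remark recording the iterative construction $x_n=T(x_{n-1})$ and the conclusion $\lim_{n\to\infty}T^n(x_0)=x^{*}$; the lemma is then invoked as a black box inside the proof of Theorem~\ref{THDtheta}. So your write-up is not an alternative route but rather a self-contained justification of a result the paper takes for granted. Your closing paragraph about how the lemma is applied to the update \eqref{GIL}--\eqref{incrementalTheta} accurately anticipates the role it plays in Theorem~\ref{THDtheta}.
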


\begin{remark}
\normalfont{The fixed point $x^{*}$ in the non-empty complete metric space $(X,d)$ can be found as 
\begin{equation}
    \lim _{n\to \infty }x_{n}=\lim _{n\to \infty }T^n(x_{0})=x^{*}
\end{equation}
in which $x$ starts with an arbitrary element $x_{0}\in X$ and define a sequence with the contraction mapping as $(x_{n})_{n\in {\mathbb  N}}$ by $ x_{n}=T(x_{n-1})$ for $n\geq 1$.}
\end{remark}

\begin{theorem}
\label{THDtheta}
Let $(\Theta,d)$ be a non-empty norm space of parameter set, in which
$\Theta=\{\theta\},d(\theta)=\|\theta\|$.
Under the proposed Global Incremental Learning~\eqref{GIL}~\eqref{incrementalTheta}, $\exists \theta^* \in \Theta$ that
\begin{equation}
\lim_{t\to \infty }\theta'(t)=\theta^{*}
\end{equation}
if and only if the norm of $\Delta \theta(t)$ is monotonically decreasing with $t$ that
\begin{equation}
\label{dtheta}
    \frac{d \| \Delta \theta(t) \|}{dt} \leq 0
\end{equation}
\end{theorem}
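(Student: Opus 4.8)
The plan is to realise Global Incremental Learning~\eqref{GIL}--\eqref{incrementalTheta} as the iteration of a single self-map on the parameter space and then invoke the Banach Fixed Point Theorem (Lemma~\ref{banachTh}). Since the client weights $\theta_i^*(t)$ and the server weight $\theta_0(t)$ are each produced deterministically from $\theta'(t)$, the composition of the client round, the cosine-similarity screening, and the averaging in~\eqref{incrementalTheta} defines a map $T:\Theta\to\Theta$ with $T(\theta'(t)) = \theta'(t)+\Delta\theta(t)=\theta'(t+1)$. Taking $d(\theta_1,\theta_2)=\|\theta_1-\theta_2\|$ on the finite-dimensional (hence complete) space $\Theta$, the hypotheses of Lemma~\ref{banachTh} are met once $T$ is shown to be a contraction in the sense of Definition~\ref{defContractMapping}. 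The bookkeeping identity that drives both implications is that, evaluated on consecutive iterates, $d(T(\theta'(t)),T(\theta'(t-1)))=\|\theta'(t+1)-\theta'(t)\|=\|\Delta\theta(t)\|$ while $d(\theta'(t),\theta'(t-1))=\|\Delta\theta(t-1)\|$, so the contraction estimate along the orbit is precisely $\|\Delta\theta(t)\|\le q\,\|\Delta\theta(t-1)\|$ for some $q\in[0,1)$.

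For the sufficiency direction I would assume~\eqref{dtheta}, so $\|\Delta\theta(t)\|$ is non-increasing. The point to exploit is that $\varepsilon(S(t))$ keeps only those client increments $\theta_i^*(t)-\theta'(t)$ lying in the half-space determined by the supervised reference direction $\theta_0(t)-\theta'(t)$; after the normalisation in~\eqref{incrementalTheta}, $\Delta\theta(t)$ is a convex combination of these aligned increments, and I would use this geometry to promote the mere monotonicity $\|\Delta\theta(t)\|\le\|\Delta\theta(t-1)\|$ to a strict, $t$-uniform ratio $q=\sup_t\|\Delta\theta(t)\|/\|\Delta\theta(t-1)\|<1$. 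With such a $q$, $T$ is a contraction, the orbit $(\theta'(t))$ is Cauchy, and Lemma~\ref{banachTh} together with its Remark yields a unique $\theta^*\in\Theta$ with $\theta'(t)=T^{t-1}(\theta'(1))\to\theta^*$.

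For the necessity direction, suppose $\theta'(t)\to\theta^*$. Then $\|\Delta\theta(t)\|=\|\theta'(t+1)-\theta'(t)\|\to0$; since convergence alone does not force monotone step sizes, I would route this implication through the contraction structure, using $\|\Delta\theta(t)\|=\|\theta'(t+1)-\theta'(t)\|\le q\,\|\theta'(t)-\theta'(t-1)\|=q\,\|\Delta\theta(t-1)\|$ with $q<1$ to conclude that $\|\Delta\theta(t)\|$ is (strictly) decreasing, i.e. $\tfrac{d}{dt}\|\Delta\theta(t)\|\le0$ in the continuous-index reading of~\eqref{dtheta}.

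The main obstacle is exactly the step that turns non-expansiveness into a genuine contraction: monotone decay of $\|\Delta\theta(t)\|$ on its own gives only $q\le1$, which the Banach theorem does not permit, so the argument has to lean on the specific structure of~\eqref{incrementalTheta} --- the cosine-similarity gate guaranteeing a positive inner product between every accepted client increment and the server direction $\theta_0(t)-\theta'(t)$, combined with the weight normalisation --- to extract a strict gap uniform in $t$. A secondary delicacy is keeping the ``only if'' direction honest, since a general convergent sequence can have non-monotone increments; handling it requires passing through the contraction bound rather than through plain convergence.
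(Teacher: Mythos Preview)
Your skeleton is identical to the paper's: the same map $T(\theta'(t))=\theta'(t)+\Delta\theta(t)$, the same bookkeeping identities $d(T(\theta'(t)),T(\theta'(t-1)))=\|\Delta\theta(t)\|$ and $d(\theta'(t),\theta'(t-1))=\|\Delta\theta(t-1)\|$, and the same appeal to Lemma~\ref{banachTh} once a contraction is in hand.

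Where you diverge is in the treatment of the strict contraction constant. The paper does \emph{not} invoke the cosine-similarity gate or any geometry of~\eqref{incrementalTheta} at this step; from $\|\Delta\theta(t)\|\le\|\Delta\theta(t-1)\|$ it simply asserts that $\exists\,q\in(0,1)$ satisfying $d(T(\theta'(t)),T(\theta'(t-1)))\le q\,d(\theta'(t),\theta'(t-1))$ and then declares $T$ a contraction by Definition~\ref{defContractMapping}. The issue you flag as the ``main obstacle'' --- that monotone decay alone gives only $q\le1$ --- is precisely the step the paper passes over without argument; your proposed use of the half-space selection and normalisation to extract a uniform strict gap is additional work that the paper does not attempt.

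For necessity the paper argues by contradiction: it takes the contraction inequality~\eqref{banachdT} as standing, supposes $\tfrac{d}{dt}\|\Delta\theta(t)\|>0$, derives $\|\Delta\theta(t)\|/\|\Delta\theta(t-1)\|>1$, and observes that no $q\in(0,1)$ can then satisfy~\eqref{banachdT}. This is in spirit your ``route through the contraction bound,'' but note that the paper's hypothesis for this half is the contraction estimate itself rather than the bare convergence $\theta'(t)\to\theta^*$; your caveat that a general convergent sequence need not have monotone increments is more careful than what the paper actually establishes.
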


\begin{proof}
\normalfont{In $(\Theta,d)$, define mapping $T: \Theta \to \Theta$ as
\begin{equation}
\label{defT}
T(\theta'(t))=\theta'(t)+\Delta \theta(t)=\theta'(t+1)
\end{equation}
where $\Delta \theta(t)$ is obtained by~\eqref{incrementalTheta}.
Therefore, 
\begin{equation}
d(T(\theta'(t)),T(\theta'(t-1)))=\|\theta'(t+1)-\theta'(t)\|=\|\Delta \theta(t)\|
\end{equation}
similarly, $d(\theta'(t),\theta'(t-1))=\|\Delta \theta(t-1)\|$.
\begin{equation}
\frac{d(T(\theta'(t)),T(\theta'(t-1)))}{d(\theta'(t),\theta'(t-1))}=\frac{\|\Delta \theta(t)\|}{\|\Delta \theta(t-1)\|}
\end{equation}
Since
\begin{equation}
\frac{d \| \Delta \theta(t) \|}{dt} \leq 0
\end{equation}
we have $\|\Delta \theta(t)\| \leq \|\Delta \theta(t-1)\|$ therefore, $\exists q\in(0,1)$
\begin{equation}
\label{banachdT}
d(T(\theta'(t)),T(\theta'(t-1)))\leq q d(\theta'(t),\theta'(t-1))
\end{equation}
where
\begin{equation}
\label{qRef}
     \frac{\|\Delta \theta(t)\|}{\|\Delta \theta(t-1)\|} \leq q <1
\end{equation}
Therefore, with Definition \ref{defContractMapping}, $T$ defined by \eqref{defT} is a contraction mapping, so the sufficiency of Theorem~\ref{THDtheta} is proved with
lemma~\ref{banachTh}.
When~\eqref{banachdT} holds, if
\begin{equation}
\label{dthetaGre}
    \frac{d \| \Delta \theta(t) \|}{dt} > 0
\end{equation}
which yields
\begin{equation}
\frac{\|\Delta \theta(t)\|}{\|\Delta \theta(t-1)\|}=
\frac{d(T(\theta'(t)),T(\theta'(t-1))}{d(\theta'(t),\theta'(t-1))}>1
\end{equation}
in which it is impossible to find $q\in (0,1)$ to satisfy~\eqref{banachdT}, which shows the contradiction and the necessity is proved with proof by contradiction.
Therefore, Theorem~\ref{THDtheta} holds if and only if the norm of $\Delta \theta(t)$ is monotonically decreasing with $t$.}
\end{proof}

Theorem~\ref{THDtheta} reveals the relationship among global learning $\theta'(t)$, local learning $\theta_i^*(t)$, server learning $\theta_0(t)$ and gives the necessary and sufficient condition for federated learning to enter the convergence stage. It is when the incremental difference $\Delta \theta(t)$ between the distributed local learning weights and the server weights decreases according to the norm $\|\Delta \theta(t)\|$, the global incremental learning weights $\theta'(t)$ gradually converge to the fixed point $\theta^*$ in the parameter space $\Theta$. 

The condition of Theorem 1 is only $\Delta\theta$ (weight different) monotonically decreasing for each training round, and the conclusion is $\Delta\theta$ approaching 0 as time goes on. The novelty of the Theorem is the new definition of convergence for FSSL that is based on weight difference rather than the loss in supervised FL. This is because client data are unlabeled in FSSL, and the loss is computed by pseudo-labels rather than ground-truth labels, which is unreliable and thus cannot guarantee model convergence. 
We are the first to use weight difference as a criterion rather than loss to determine model convergence in FSSL.

\begin{table*}[!t]
\caption{Method-Wise Accuracy Evaluation Stats on IID and non-IID settings of MNIST, CIFAR10 and CIFAR100 datasets. Our proposed FedIL outperforms most of the state-of-the-art methods.}
\centering
\small
\renewcommand\tabcolsep{6.0pt}
\begin{tabular}{lcccccccccc}
\hline
                 & \multicolumn{4}{c}{MNIST}                                                                                                                                                                                                                                                                               & \multicolumn{4}{c}{CIFAR10}                                                                                                                                                                                                                                                                             & \multicolumn{2}{c}{CIFAR100}                                                                                                        \\ \cline{2-11} 
                 & \multicolumn{2}{c}{IID}                                                                                                             & \multicolumn{2}{c}{non-IID}                                                                                                                                        & \multicolumn{2}{c}{IID}                                                                                                                                           & \multicolumn{2}{c}{non-IID}                                                                                                          & IID                                                              & non-IID                                                           \\ \hline
Label rate       & $\gamma$=0.01                                                           & $\gamma$=0.1                                                            & $\gamma$=0.01                                                                          & $\gamma$=0.1                                                                           & $\gamma$=0.01                                                                          & $\gamma$=0.1                                                                           & $\gamma$=0.01                                                           & $\gamma$=0.1                                                            & $\gamma$=0.1                                                            & $\gamma$=0.1                                                            \\ \hline
Fully-Supervised & \multicolumn{4}{c}{99.50\%±0.02}                                                                                                                                                                                                                                                                        & \multicolumn{4}{c}{93.59\%±0.03}                                                                                                                                                                                                                                                                        & \multicolumn{2}{c}{71.71\%±0.05}                                                                                                    \\ \hline
FedMatch         & \begin{tabular}[c]{@{}c@{}}97.13\%\\ ±0.15\end{tabular}          & \begin{tabular}[c]{@{}c@{}}98.89\%\\ ±0.21\end{tabular}          & \begin{tabular}[c]{@{}c@{}}97.22\%\\ ±0.23\end{tabular}                         & \begin{tabular}[c]{@{}c@{}}98.28\%\\ ±0.26\end{tabular}                         & \begin{tabular}[c]{@{}c@{}}55.30\%\\ ±0.17\end{tabular}                         & \begin{tabular}[c]{@{}c@{}}82.20\%\\ ±0.29\end{tabular}                         & \begin{tabular}[c]{@{}c@{}}54.26\%\\ ±0.13\end{tabular}          & \textbf{\begin{tabular}[c]{@{}c@{}}79.50\%\\ ±0.15\end{tabular}} & \begin{tabular}[c]{@{}c@{}}46.43\%\\ ±0.33\end{tabular}          & \textbf{\begin{tabular}[c]{@{}c@{}}46.60\%\\ ±0.15\end{tabular}} \\ 
FedU             & \begin{tabular}[c]{@{}c@{}}95.42\%\\ ±0.23\end{tabular}          & \begin{tabular}[c]{@{}c@{}}98.13\%\\ ±0.12\end{tabular}          & \begin{tabular}[c]{@{}c@{}}96.74\%\\ ±0.22\end{tabular}                         & \begin{tabular}[c]{@{}c@{}}98.78\%\\ ±0.08\end{tabular}                         & \begin{tabular}[c]{@{}c@{}}54.17\%\\ ±0.08\end{tabular}                         & \begin{tabular}[c]{@{}c@{}}60.05\%\\ ±0.13\end{tabular}                         & \begin{tabular}[c]{@{}c@{}}58.73\%\\ ±0.15\end{tabular}          & \begin{tabular}[c]{@{}c@{}}72.36\%\\ ±0.32\end{tabular}          & \begin{tabular}[c]{@{}c@{}}30.82\%\\ ±0.26\end{tabular}          & \begin{tabular}[c]{@{}c@{}}31.51\%\\ ±0.15\end{tabular}          \\ 
FedEMA           & \begin{tabular}[c]{@{}c@{}}97.17\%\\ ±0.21\end{tabular}          & \begin{tabular}[c]{@{}c@{}}98.90\%\\ ±0.13\end{tabular}          & \begin{tabular}[c]{@{}c@{}}96.26\%\\ ±0.13\end{tabular}                         & \begin{tabular}[c]{@{}c@{}}98.60\%\\ ±0.25\end{tabular}                         & \begin{tabular}[c]{@{}c@{}}54.85\%\\ ±0.09\end{tabular}                         & \begin{tabular}[c]{@{}c@{}}63.73\%\\ ±0.32\end{tabular}                         & \begin{tabular}[c]{@{}c@{}}58.44\%\\ ±0.22\end{tabular}          & \begin{tabular}[c]{@{}c@{}}72.49\%\\ ±0.18\end{tabular}          & \begin{tabular}[c]{@{}c@{}}30.25\%\\ ±0.26\end{tabular}          & \begin{tabular}[c]{@{}c@{}}31.65\%\\ ±0.23\end{tabular}          \\ 
Orchestra        & \begin{tabular}[c]{@{}c@{}}96.57\%\\ ±0.21\end{tabular}          & \begin{tabular}[c]{@{}c@{}}97.86\%\\ ±0.13\end{tabular}          & \begin{tabular}[c]{@{}c@{}}95.97\%\\ ±0.28\end{tabular} & \begin{tabular}[c]{@{}c@{}}97.74\%\\ ±0.15\end{tabular} & \begin{tabular}[c]{@{}c@{}}60.32\%\\ ±0.17\end{tabular} & \begin{tabular}[c]{@{}c@{}}67.17\%\\ ±0.23\end{tabular} & \begin{tabular}[c]{@{}c@{}}57.92\%\\ ±0.25\end{tabular}          & \begin{tabular}[c]{@{}c@{}}65.24\%\\ ±0.17\end{tabular}          & \begin{tabular}[c]{@{}c@{}}31.15\%\\ ±0.17\end{tabular}          & \begin{tabular}[c]{@{}c@{}}31.42\%\\ ±0.26\end{tabular}          \\ \hline
\textbf{FedIL}            & \textbf{\begin{tabular}[c]{@{}c@{}}98.30\%\\ ±0.21\end{tabular}} & \textbf{\begin{tabular}[c]{@{}c@{}}99.05\%\\ ±0.13\end{tabular}} & \textbf{\begin{tabular}[c]{@{}c@{}}98.46\%\\ ±0.08\end{tabular}}                & \textbf{\begin{tabular}[c]{@{}c@{}}99.08\%\\ ±0.25\end{tabular}}                & \textbf{\begin{tabular}[c]{@{}c@{}}61.31\%\\ ±0.25\end{tabular}}                & \textbf{\begin{tabular}[c]{@{}c@{}}82.51\%\\ ±0.14\end{tabular}}                & \textbf{\begin{tabular}[c]{@{}c@{}}58.98\%\\ ±0.16\end{tabular}} & \begin{tabular}[c]{@{}c@{}}79.14\%\\ ±0.27\end{tabular}          & \textbf{\begin{tabular}[c]{@{}c@{}}48.25\%\\ ±0.17\end{tabular}} & \begin{tabular}[c]{@{}c@{}}46.39\%\\ ±0.25\end{tabular}          \\ \hline
\end{tabular}
\label{Acc}
\end{table*}

\section{Experiment}

\subsection{Experimental Setup}
\noindent\textbf{Datasets.}
We conduct our experiments using three public datasets, including MNIST \cite{lecun1998gradient}, CIFAR10 and CIFAR100 \cite{krizhevsky2009}. The MNIST dataset is a widely used dataset in the field of computer vision and machine learning. It consists of handwritten digit images and is considered a benchmark dataset for image classification tasks. This dataset is split into a 60,000-image training set and a 10,000-image test set. There are 60,000 32x32 color images in CIFAR10 and CIFAR100, where the training set has 50,000 images and the test set has 10,000 images. The CIFAR10 and CIFAR100 datasets, on the other hand, contain natural images belonging to 10 and 100 classes respectively. These datasets contain a large number of color images, making them a popular choice for testing the performance of image classification algorithms. Both CIFAR datasets have been used extensively in the research community to evaluate the performance of various computer vision and machine learning models.

\noindent\textbf{Federated System Setting.}
The whole dataset $D$ will be randomly divided into two separate groups: a labeled set $D^s$ and an unlabeled set $D^u$. The labeled set will be held by the server, while the unlabeled set will be further divided and distributed among $K=100$ clients.
Each client will receive $\tfrac{|D^u|}{K}$ instances for the independent and identically distributed (IID) setting and 20\% of the total classes for the non-IID (Not IID) setting. The quantity of unlabeled training data is represented by $|D^u|$.
To represent the ratio of labeled data in the entire training dataset, the variable $\gamma$ is introduced. This means that there will be $\gamma * |D|$ labeled samples on the server and $\tfrac{(1-\gamma)*|D|}{K}$ unlabeled data on each client. This distribution is designed to ensure an equal distribution of labeled and unlabeled data among the clients and the server.
We set $\gamma=0.01, 0.1$ in the experiments, and only 5 local clients are working in each round. The total training rounds are 2000.

\noindent\textbf{Baselines.}
To fairly evaluate the proposed FedIL framework, we use the same backbone ResNet9 and the following state-of-the-art benchmarks. \textbf{1) Fully-Supervised}: centralized supervised training for the whole labeled datasets.
\textbf{2) FedMatch} \cite{jeong2021federated}: use FedAvg-FixMatch with inter-client consistency and parameter decomposition to train the models.
\textbf{3) FedU} \cite{DBLP:conf/iccv/ZhuangG0ZY21}: use the divergence-aware predictor updating technique with self-supervised BYOL \cite{grill2020bootstrap} to train the models. 
\textbf{4) FedEMA} \cite{zhuang2021divergence}:  use an Exponential Moving Average (EMA) of the global model to adaptively update online client networks. 
\textbf{5) Orchestra} \cite{DBLP:conf/icml/LubanaTKDM22}: use a novel clustering-based FSSL technique.

\subsection{Experimental Results}

\noindent\textbf{Comparison with FSSL methods.} We demonstrate our experimental results in Table \ref{Acc}, which shows a performance summary using MNIST, CIFAR10 and CIFAR100 by different FSSL methods with the comparison of accuracy and parameters in the scenarios of IID and non-IID.

The method FedIL we use in our experiments is limited in the sense that it only utilizes the training dataset and the test dataset, and does not incorporate the validation dataset. This is in contrast to FedMatch, which leverages the validation dataset to optimize its weight aggregation by adjusting its parameters in the server. The validation dataset plays a crucial role in this process as it consists of labeled data, and hence allows for a supervised learning approach. By using the validation dataset, FedMatch effectively learns from two labeled datasets, namely $D^{s1}$ which is the labeled data in the training dataset, and $D^{s2}$ which is the labeled data in the validation dataset. This results in better performance compared to our method, which only utilizes a single labeled dataset.



In practice, this solution is not fully applicable, especially when the amount of labeled data is limited. In such cases, dividing the limited labeled data into separate datasets for training and validation becomes problematic as there may not be enough labeled samples to ensure a representative and balanced split. Over-fitting is a common issue in these scenarios, where the model becomes overly reliant on the limited labeled data and fails to generalize well to unseen data.
To address these limitations, our model employs a different approach to solving for parameters. It uses an incremental credibility-based process that is more in line with the practical application of semi-supervised learning. This method reduces the amount of labeled data required for training, making it more suitable for scenarios where the labeled data is limited. Additionally, our model can also be extended to incorporate validation data if available, making it more flexible and applicable in a wider range of scenarios.

\begin{figure}
  \centering
  \includegraphics[width=0.45\textwidth]{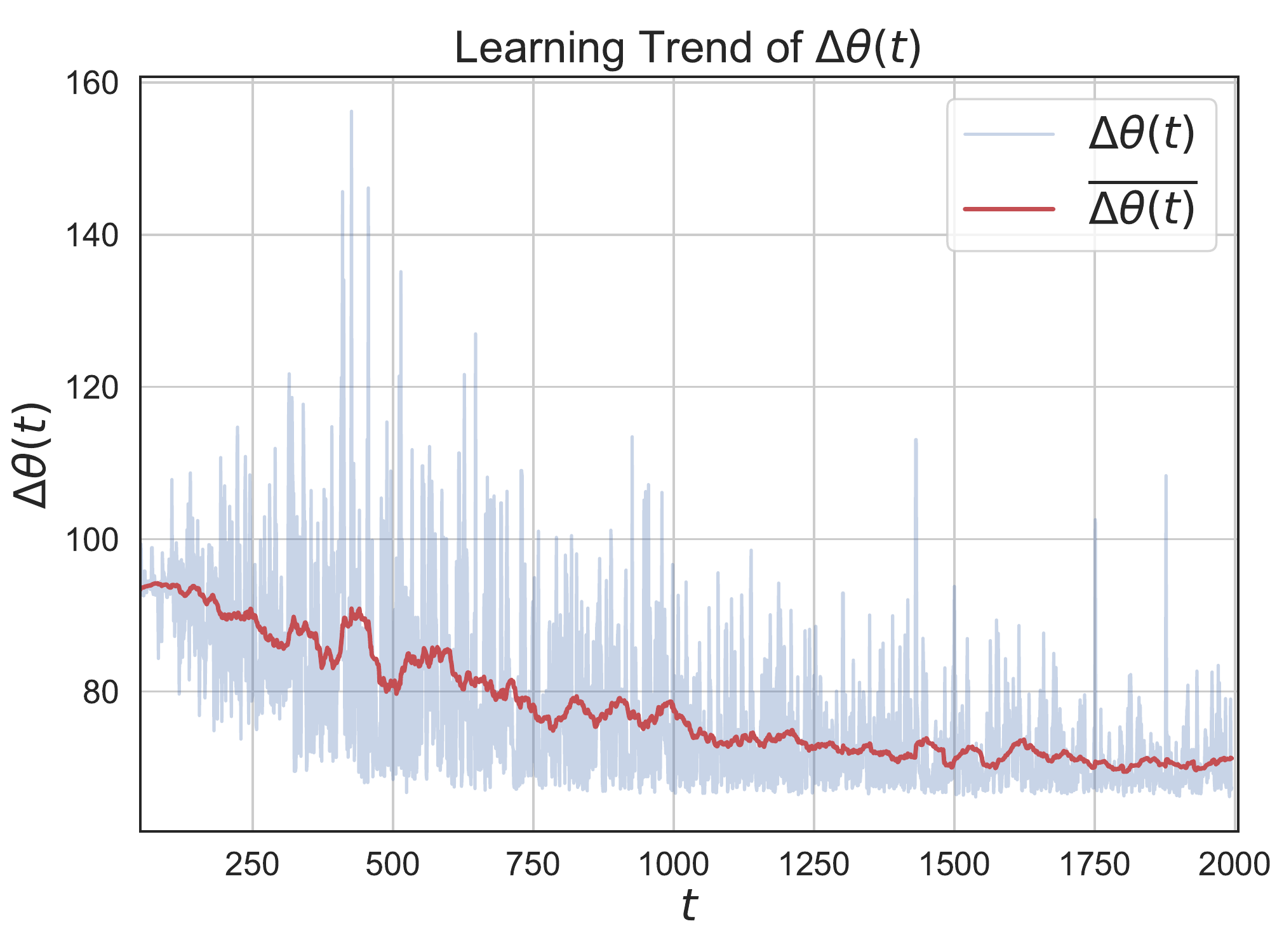}
  \caption{Trend of $\Delta \theta(t)$ in FedIL.}
  \label{Theta}
\end{figure}

\noindent\textbf{Trend of incremental $\Delta\theta(t)$.}
The proposed incremental credibility approach ensures that the learning process is gradually convergent. As illustrated in Figure~\ref{Theta}, the trend of $\Delta \theta(t)$ during the incremental learning of the proposed method can be seen, where $\overline{\Delta\theta(t)}$ represents the moving average of $\Delta\theta(t)$. Despite any perturbations that may occur during the learning process, the overall trend remains one of gradual decline, demonstrating that the proposed federated incremental strategy effectively solves the optimal mapping based on model parameter learning and leads to a convergent solution process. The experimental results further confirm the criterion outlined in Theorem~\ref{THDtheta}, indicating that the system's solution process, designed based on the criteria defined in Theorem~\ref{THDtheta}, will eventually converge to a fixed point.

\begin{figure}[h!]
  \centering
  \includegraphics[width=0.45\textwidth]{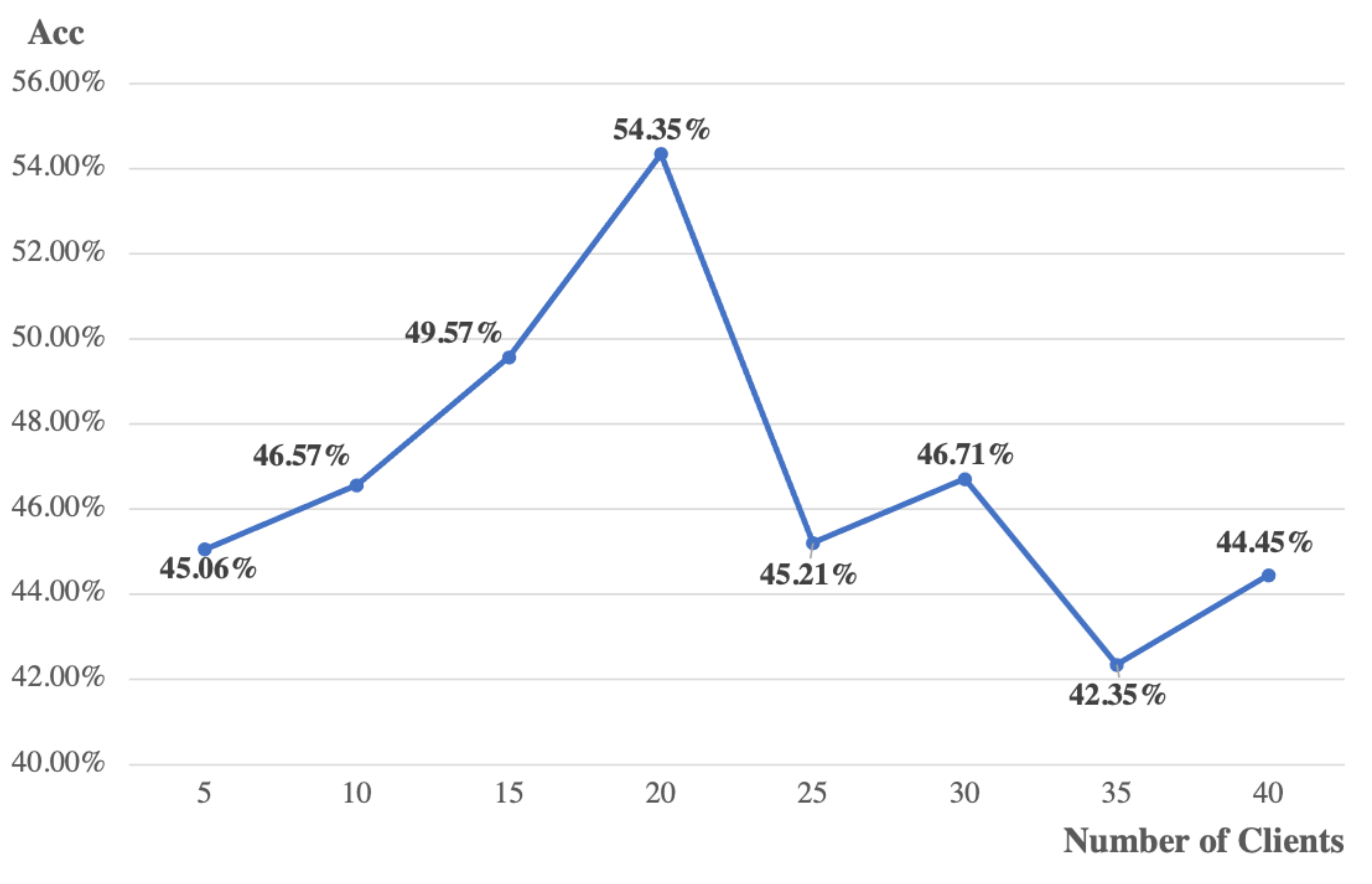}
  \caption{The Impact of Number of Clients. }
  \label{fig5}
\end{figure}

\section{Ablation Study}
\subsection{The Impact of Number of Clients.}
The impact of the number of clients participating in each training round on the performance of the global model can be seen in Figure~\ref{fig5}. Our experiments on the CIFAR10 dataset, conducted over 1000 training rounds, demonstrate a clear correlation between the number of clients selected and the global model performance. As the number of clients increases from 5 to 20, we observe a corresponding improvement in the global model's performance. However, beyond a certain point, we see that further increases in the number of clients lead to a decrease in the global model's performance. This is likely due to the fact that the data on the clients is unlabeled and incorporating too much of this unknown data in the early stages of training can slow down the convergence of the model.

\begin{figure}[h!]
  \centering
  \includegraphics[width=0.45\textwidth]{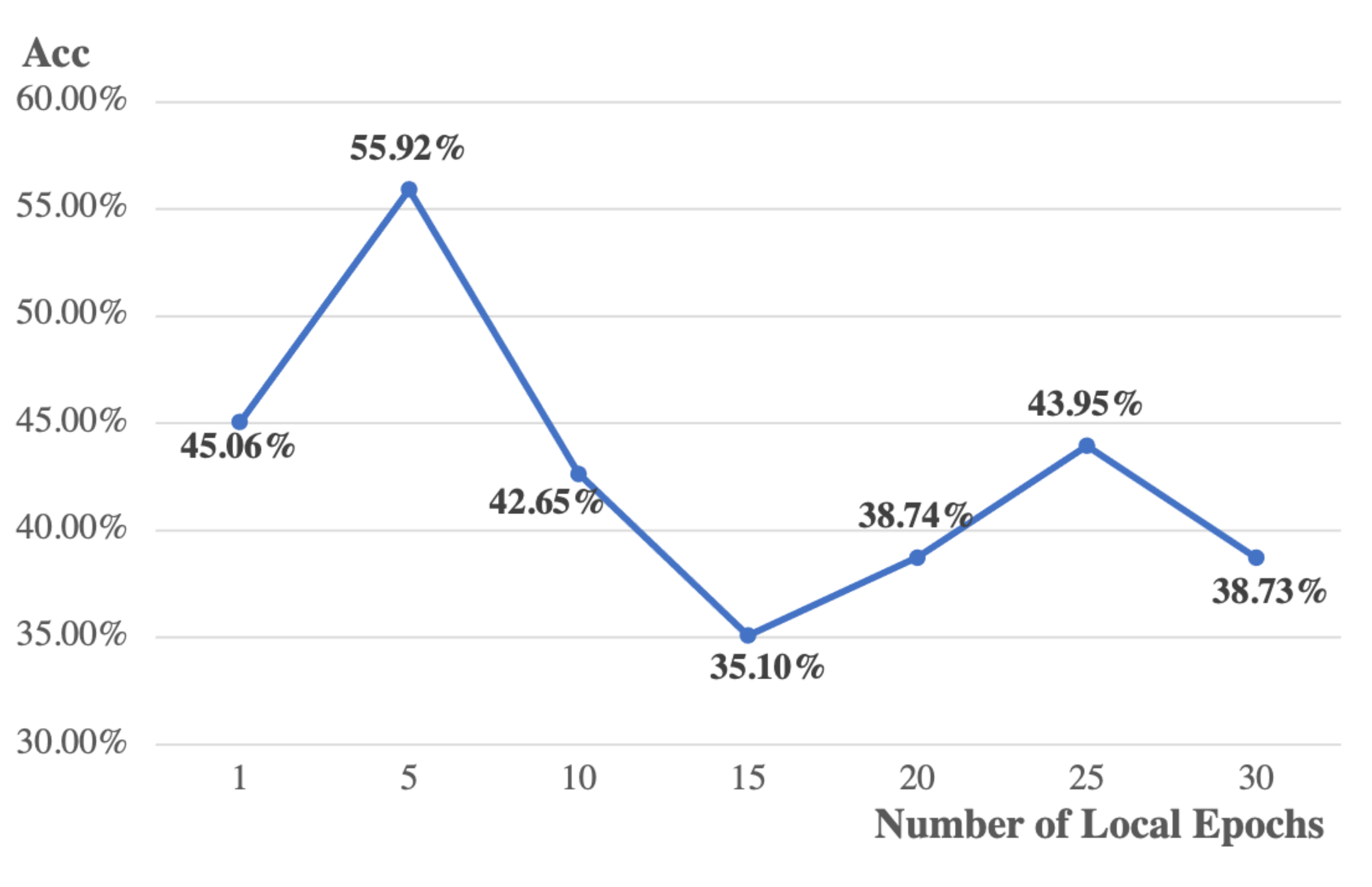}
  \caption{The Influence of Number of Local Training Epochs. }
  \label{fig6}
\end{figure}

\subsection{The Influence of Number of Local Training Epochs.}
The influence of the local training epochs in each training round on the performance of the global model is demonstrated in Figure~\ref{fig6}. The data on the clients being unlabeled, and conducting too many epochs in each client during the early stages of training can have a negative effect on the performance of the model. Our experiments, conducted on the CIFAR10 dataset over 1000 training rounds, show that the optimal number of local training epochs is 5. The results suggest that it is crucial to strike a balance between the number of epochs and the stage of training in order to maintain the optimal performance of the global model.

\begin{table}[h!]
\caption{Modules Analysis in FedIL.}
\centering
\small
\renewcommand\tabcolsep{5.0pt}
\begin{tabular}{lc}
\hline
Modules                                              & Acc   \\ \hline
FedIL                                                & 61.31 \\
FedIL without pseudo-label set                       & 60.77 \\
FedIL without Cosine Similarity                      & 28.59 \\
FedIL without a pseudo-label set and Cosine Similarity & 27.21 \\ \hline
\end{tabular}
\label{ablation}
\end{table}

\subsection{The Evaluation of Different Modules}
For the necessity of each module, we also did an ablation analysis. The results, as presented in Table~\ref{ablation}, demonstrate that even without the use of the pseudo-label candidate mechanism and the similar incremental screening strategy, the proposed method still achieved a close-to-30\% accuracy within a limited number of training rounds. The incorporation of the candidate label mechanism resulted in a 1\% improvement in accuracy, while the use of the similarity adjustment screening strategy resulted in an increase to over 60\%. When both the candidate tag mechanism and the similarity incremental screening were utilized simultaneously, the accuracy reached over 61\%. This highlights the significant impact of a similar incremental strategy on performance and the contribution of the candidate labeling mechanism toward accelerating the learning process. The core of this method involves the continuous enhancement of label identification by the upper-level server calculation, and the selection of candidate labels for semi-supervised learning tasks through similar incremental learning at each local client calculation in the network, which iterates over time.

\begin{table}[h!]
\caption{Accuracy Stats of FedIL on Various Count of Successive Labeling $k$. The dataset used here is CIFAR10.}
\centering
\small
\renewcommand\tabcolsep{1.0 pt}
\begin{tabular}{lccccccc}
\hline
\multicolumn{1}{c}{\begin{tabular}[c]{@{}c@{}}Count of Successive \\ Labeling $k$\end{tabular}} & 2                                                     & 3                                                     & 4                                                     & 5                                                     & 6                                                     & 7                                                              & 8                                                     \\ \hline
Accuracy                                                                                      & \begin{tabular}[c]{@{}c@{}}52.95\\ ±0.13\end{tabular} & \begin{tabular}[c]{@{}c@{}}54.03\\ ±0.35\end{tabular} & \begin{tabular}[c]{@{}c@{}}54.10\\ ±0.26\end{tabular} & \begin{tabular}[c]{@{}c@{}}59.51\\ ±0.17\end{tabular} & \begin{tabular}[c]{@{}c@{}}60.27\\ ±0.26\end{tabular} & \begin{tabular}[c]{@{}c@{}}61.31\\ ±0.25\end{tabular} & \begin{tabular}[c]{@{}c@{}}62.03\\ ±0.37\end{tabular} \\ \hline
\end{tabular}
\label{tab1}
\end{table}

\subsection{Count of Successive Labeling $k$ and Threshold $\tau$}
According to the trend shown in Table~\ref{tab1}, it is evident that there is a trade-off between the reliability of the candidate tag data and calculation time. A larger value of $k$ results in more reliable candidate labels, but it also increases the calculation time. Hence, when selecting parameters, it is essential to consider not only the performance metrics but also the distribution characteristics, computational costs, and time constraints. In our study, we have chosen $k=7$ and threshold $\tau$ equal to 0.95, which is consistent with the parameters used in the Fixmatch \cite{DBLP:conf/nips/SohnBCZZRCKL20}. By balancing the competing demands of accuracy and efficiency, we aim to achieve optimal results in our experiments.



\section{Conclusion}
This paper proposes Federated Incremental Learning (FedIL) with an incremental credibility learning strategy to select highly credible pseudo-labels to join the pseudo-label set to establish complete semi-supervised training on unknown samples in each client. We utilize cosine similarity to select client weights and propose a standardized global incremental learning framework to speed up training models while ensuring server-client consistency between learning supervised and unsupervised tasks. Based on Banach's fixed point theorem, we further prove the necessary and sufficient conditions for the convergence of global incremental learning, revealing that the system enters a progressive convergence stage when the weight increment difference between server and client decreases.

Experimental validation shows that the proposed FedIL outperforms most of the FSSL baselines on common public datasets, thus demonstrating the feasibility of the proposed FedIL mechanism for performance improvement in federated semi-supervised learning by determining the estimation of unlabeled data with incremental learning. Due to the existing hardware limitations of client devices, federated learning can only be implemented with small backbones and datasets, e.g., MNIST, CIFAR10 and CIFAR100. In future work, we will investigate the applicability of large-scale datasets, e.g., ImageNet, in the FSSL scenario.

\bibliographystyle{ieeetr}
\bibliography{ref}
\end{document}